%%%% ijcai16.tex

\typeout{IJCAI-16 Instructions for Authors}

% These are the instructions for authors for IJCAI-16.
% They are the same as the ones for IJCAI-11 with superficical wording
%   changes only.

\documentclass{article}
% The file ijcai16.sty is the style file for IJCAI-16 (same as ijcai07.sty).
\usepackage{ijcai16}

% Use the postscript times font!
\usepackage{times}

% the following package is optional:
%\usepackage{latexsym}
\usepackage{amsmath,bm}

\usepackage{algorithm}
\usepackage{algorithmic}
\usepackage[english]{babel}

\usepackage{enumerate}
\usepackage{amsfonts}
\usepackage{subfigure}
\usepackage{graphicx}
\usepackage{amsthm}
\usepackage{epsfig}

\newcommand{\Xv}{\mathbf{X}}

\newcommand{\argmin}{\operatornamewithlimits{argmin}}

\usepackage{color}

\newtheorem{theorem}{Theorem}

%\renewcommand{\algorithmicrequire}{\textbf{Input:}}
%\renewcommand{\algorithmicensure}{\textbf{Output:}}

% Following comment is from ijcai97-submit.tex:
% The preparation of these files was supported by Schlumberger Palo Alto
% Research, AT\&T Bell Laboratories, and Morgan Kaufmann Publishers.
% Shirley Jowell, of Morgan Kaufmann Publishers, and Peter F.
% Patel-Schneider, of AT\&T Bell Laboratories collaborated on their
% preparation.

% These instructions can be modified and used in other conferences as long
% as credit to the authors and supporting agencies is retained, this notice
% is not changed, and further modification or reuse is not restricted.
% Neither Shirley Jowell nor Peter F. Patel-Schneider can be listed as
% contacts for providing assistance without their prior permission.

% To use for other conferences, change references to files and the
% conference appropriate and use other authors, contacts, publishers, and
% organizations.
% Also change the deadline and address for returning papers and the length and
% page charge instructions.
% Put where the files are available in the appropriate places.

\title{A Communication-Efficient Parallel Method for Group-Lasso}
\author{Binghong Chen, Jun Zhu \\
Tsinghua University, Beijing  \\
cbh13@mails.tsinghua.edu.cn, dcszj@mail.tsinghua.edu.cn}

\begin{document}

\maketitle

\begin{abstract}
Group-Lasso (gLasso) identifies important explanatory factors in predicting the response variable by considering the grouping structure over input variables. However, most existing algorithms for gLasso are not scalable to deal with large-scale datasets, which are becoming a norm in many applications.
In this paper, we present a divide-and-conquer based parallel algorithm (DC-gLasso) to scale up gLasso in the tasks of regression with grouping structures. DC-gLasso only needs two iterations to collect and aggregate the local estimates on subsets of the data, and is provably correct to recover the true model under certain conditions. We further extend it to deal with overlappings between groups. Empirical results on a wide range of synthetic and real-world datasets show that DC-gLasso can significantly improve the time efficiency without sacrificing regression accuracy.
\end{abstract}

\section{Introduction}

In many regression problems, we are interested in identifying important explanatory factors in predicting the response variable. Lasso~\cite{tibshirani1996regression} represents a type of widely applied methods with sound theoretical guarantee. To consider the settings where each explanatory factor may be represented by a group of derived input variables, group-Lasso (gLasso) has been developed~\shortcite{yuan2006model}, which yields group-wise sparse estimates. gLasso has been applied in various applications, including multifactor analysis-of-variance (ANOVA) problem~\cite{yuan2006model}, learning pairwise interactions between regression factors~\cite{lim2013learning}, solving EEG source problems arising from visual activation studies~\cite{lim2013group}, estimating breeding values using molecular markers spread over the whole genome~\cite{ogutu2014regularized}, visual saliency detection~\cite{souly2015visual} and functional MRI data analysis ~\cite{shimizu2015toward}.

Many algorithms have been developed to solve the optimization problem of gLasso, including blockwise coordinate descent (BCD)~\cite{yuan2006model,meier2008group}, (fast) iterative shrinkage-thresholding algorithm (ISTA/FISTA)%\junz{add full name}
~\cite{beck2009fast,liu2009slep,villa2014proximal},  hybrid BCD and ISTA~\cite{qin2013efficient}, alternating direction method (ADM)~\cite{qin2012structured} and groupwise-majorization-descent (GMD)~\cite{yang2014fast}. Among these algorithms, GMD was reported to run faster than BCD and FISTA~\cite{yang2014fast} on a single machine.
However, with the fast-growing volume of datasets, the storage and computation of data in one single machine become difficult, and the need to design a scalable algorithm for regression with grouping structure is urgent.
Previous work has been done to parallelize computation on the level of one iteration in an iterative procedure~\cite{peng2013parallel}. Although this type of parallel computing can save computation time, the time reduced is often limited due to the frequent communication between machines.%\junz{why not use this as a baseline?}

In contrast, a divide-and-conquer (DC) approach performs parallel computing at the level of the whole optimization process, where the dataset is split into shards and each worker handles a subset of data locally to produce local estimates. Then, a master node collects the local estimates and combines them to obtain a final estimate according to some aggregation strategy. For linear models, {\it averaging} is the simplest and most popular strategy, which was proposed by Mcdonald et al.~\shortcite{mcdonald2009efficient} and carefully studied by Zinkevich et al.~\shortcite{zinkevich2010parallelized} as well as Zhang, Duchi and Wainwright~\shortcite{zhang2012communication}.
Wang et al.~\shortcite{wang2014median} adopt a similar idea and develop a two-step parallel inference algorithm for Lasso, where in the model selection phase, the variables are selected using majority voting from the local Lasso estimates; and in the coefficient estimation phase, the coefficients of the selected variables are estimated by averaging the local ordinary least square (OLS) estimates.

In this paper, we adopt the DC framework and present a parallel inference algorithm for gLasso (DC-gLasso) in the regression tasks with grouping structures among input variables. Similar to the algorithm proposed by Wang et al.~\shortcite{wang2014median}, our algorithm has two distributed computing steps --- it aggregates the local sparse estimates by gLasso to select a subset of variables via majority voting at the first step and then averages the local coefficients estimated by OLS regression on the selected variables to get the final estimate. We show that our algorithm successfully scales up regression with grouped variables by theoretical analysis and experiment evidence. We further extend the method to deal with overlapping structures among feature groups~\cite{Jacob2009Group}.

The rest of the paper is organized as follows. We will formulate the problem and introduce group-Lasso in next Section. In Section 3, we will present DC-gLasso in detail. Finally, we will present some theoretical analysis (Section 4) and empirical results (Section 5).

%\section{Preliminaries}

%\junz{briefly review the standard setting of regression with Lasso, and the method by Wang et al. This is helpful to motivate our method and explain their connection and difference. Can be in half page.}

\section{Problem formulation and group-Lasso}

Let $\mathbf{X}$ denote a given $n\times p$ design matrix, where $n$ is the number of observations and $p$ is the number of features. We consider the linear regression model, where the response variables $Y \in \mathbb{R}^n$ are modeled as:
\begin{equation}
\label{eq1}
Y=\mathbf{X}\beta+\epsilon,
\end{equation}
$\beta$ is a $p$-dimensional coefficient vector, and $\epsilon$ is the Gaussian white noise vector with mean $0$ and variance $\sigma^2$.

An optimal weight vector $\beta$ can be learned by minimizing a squared error with some regularization, e.g., $L_2$-norm in ridge regression. In many scenarios, we expect to have a sparse model (i.e., most of the elements of $\beta$ are zero), which is useful to identify a set of important explanatory factors for predicting the response variables and meanwhile protect the model from over-fitting. Various techniques have been developed for deriving a sparse estimate, with Lasso~\cite{tibshirani1996regression} as one of the most extensively studied examples. Lasso performs an $L_1$-norm regularized linear regression problem and selects each individual elements.

In many applications, the variables are not independent. Instead, they may have some grouping structure. We would expect to select the variables at the group level. That is, a group of correlated variables is selected or discarded at the same time. Formally, let $q$ denote the number of groups, and $d_i$ denote the size of group $i$. So $d_i$ must satisfy the condition that $\sum_{i=1}^{q}d_i=p$. Accordingly, the design matrix can be partitioned into $q$ sub-matrices $\Xv_i \in \mathbb{R}^{ n \times d_i}$, and the linear regression model can be restated as:
%The model is assumed to be "sparse", which means most elements of $\beta$ are zero. Besides, we know about the grouping information of variables. Therefore, the model can be restated as follows:
\begin{equation}
\label{eq2}
Y=\sum_{i=1}^{q} \mathbf{X}_i\beta_i + \epsilon,
\end{equation}
where $\beta_i$ is the corresponding $d_i$-dimensional coefficient vector for the variables in group $i$. The groupwise sparsity means that the groupwise vector $\beta_i$ is zero or not.
%, most of which are zero according to the sparsity assumption. Notice that (\ref{eq2}) is simply a rewrite of (\ref{eq1}) with $\mathbf{X}=(\mathbf{X}_1, \mathbf{X}_2, \cdots, \mathbf{X}_q)$ and $\beta = (\beta_1^T, \beta_2^T, \cdots, \beta_q^T)^T$.
To obtain the above sparsity, group-Lasso (gLasso)~\cite{yuan2006model} solves the $L_{2,1}$-norm regularized linear regression problem:
%\begin{center}
%Given $Y$ and $\mathbf{X}$, recover $\beta$ from the model \ref{eq2}.
%\end{center}
%Group-lasso selects the model by solving the following optimization problem:
\begin{equation}\label{eq:gLasso}
\hat{\beta} = \argmin_{\beta}  \left\| Y - \mathbf{X}\beta \right\|_2^2 + \lambda \sum_{i=1}^{q} \left\| \beta_i \right\|_2,
\end{equation}
where $\lambda$ is a positive regularization parameter to control the sparsity level of the estimates. %for some well-chosen positive parameter $\lambda$.

The gLasso problem (\ref{eq:gLasso}) is convex. Several solvers have been developed, including the classical Block Coordinate Descent (BCD) method~\cite{yuan2006model,meier2008group}, the gradient based ISTA/FISTA method and its extension~\cite{beck2009fast,liu2009slep,villa2014proximal}, the extension and combination of BCD and ISTA~\cite{qin2013efficient}, the alternating direction method (ADM)~\cite{qin2012structured} and the groupwise-majorization-descent (GMD) algorithm~\cite{yang2014fast}. Among these algotithms, GMD was reported to run much faster than BCD and FISTA~\cite{yang2014fast}.

However, all the above methods focus on solving the gLasso problem on a single machine. With the increase in the size of available data and the amount of exploitable computation resources, it is desirable to design a communication-efficient distributed algorithm to solve this optimization problem. In next section, we present a low-communication-cost parallel algorithm to solve the group-structured regression problem. As we shall see, our method is compatible with all the above single-machine algorithms to solve the sub-gLasso problems on each local machine.
%only defines the group-lasso problems to be handled on each machine and the communications between them, so we can adopt any of the above algorithms when solving the sub-problems on each machine.

\vspace{-.1cm}
\section{A Divide-and-Conquer Algorithm}
\vspace{-.1cm}

We now present a divide-and-conquer based parallel algorithm for group-Lasso. We build our work on the recent progress of MEdian Selection Subset AGgregation Estimator (MESSAGE)
%(\junz{full name?})
~\cite{wang2014median} for solving Lasso problems on large-scale data sets.
MESSAGE is a divide-and-conquer algorithm that parallelizes Lasso selection, and it has excellent performance in variable selection, estimation, prediction, and computation time compared to alternative methods. %The algorithm first selects the best model via Lasso in parallel, and then gives the estimates of model parameters via ordinary least squares (OLS) regression in parallel.

%\emph{Message} works as follows: before the algorithm starts, training data is distributed randomly on $m$ machines. First, lasso is performed on each machine to select local model features from its subset of training data. Then, the features of the global model are selected using majority vote from the local model features: if a feature is selected in a local model, the local model "votes" for that feature; if no less than half of the local models "vote" for one feature, it is selected in the global model (otherwise it's not). When the voting procedure is over, the sparsity pattern of the global model is distributed to each machine to override the local model's sparsity pattern. Finally, each machine performs OLS to estimate the coefficients of the model and those coefficients are gathered and averaged to give the coefficient estimates for the global model.

We adopt a similar divide-and-conquer approach in the regression tasks with grouping structure among variables. As outlined in Algorithm~\ref{alg-DC-gLasso}, our DC-gLasso algorithm consists of two stages --- the {\it model selection} stage and the {\it coefficient estimation} stage. Each stage is performed in a divide-and-conquer framework, as explained below.

\subsection{The model selection stage}

Given a data set with $n$ examples, we first split it randomly into $m$ different subsets, and distribute them onto $m$ machines.
Now we have $\frac{n}{m}$ samples on each machine. We will use $(\mathbf{X}^k, Y^k)$ to denote the subset on machine $k$.

At the model selection stage, we perform gLasso on each machine for a fixed number of $\lambda$ values. This can be done by any solver mentioned above. Then on the $k^{th}$ machine, we select the optimal model $\hat{M}_k$ through BIC criterion. Let $\hat{\beta}^k$ denote the (sparse) estimate on machine $k$. We define
\begin{eqnarray}
\hat{M}_k = \left\{i | \hat \beta^k_i \neq 0,~i = 1\cdots q \right\}, \nonumber
\end{eqnarray}
which denotes the set of groups that have non-zero weights in the local estimate on machine $k$.
%which means that the local model on the $i^{th}$ machine contains feature group $\beta_j$, where $j\in \hat{M}_i$.
Once the master node collects all the local estimates $\{ \hat{M}_k: k=1 \cdots m\}$, it combines the selected models using majority voting to get the final sparse model $\hat M$, that is,
\begin{eqnarray}\label{eq:MV}
\hat{M}=\left\{ i|\sum_{k=1}^m 1_{i\in \hat{M}_k}\geq \frac{m}{2} \right\},
\end{eqnarray}
where $1_{i \in \hat{M}_k}$ is an indicator function that has value $1$ if $i \in \hat{M}_k$ and $0$ otherwise. We can see that the final estimate $\hat{M}$ selects feature group $i$ if it is selected in no less than a half $(m/2)$ of the local models. Notice that, here, different from the original MESSAGE algorithm, the variables are selected in groups rather than in individuals.

\subsection{The coefficient estimation stage}

After we get the sparse pattern in the above step, we perform the coefficient estimation again in the divide-and-conquer setting, with the data split over $m$ machines. In this stage, we first distribute the selected model $\hat{M}$ to all the $m$ machines. Then we perform OLS on the local data subset sitting on each machine. Notice that we can now throw away the variables which are not in model $\hat{M}$, so $\beta_{\hat{M}}$ is estimated by:
\begin{eqnarray}\label{eq:localOLE}
\hat{\beta}_{\hat{M}}^k = \argmin_{\beta^k_{\hat{M}}} \left\| Y_{\hat{M}}^k - \mathbf{X}^k_{\hat{M}}\beta^k_{\hat{M}} \right\|_2^2,
\end{eqnarray}
where $\mathbf{X}^k_{\hat{M}}$ denotes the part of the $k^{th}$ design matrix (i.e., $\mathbf{X}^k$) that is selected by the sparse pattern $\hat{M}$, likewise for $Y^k_{\hat{M}}$.

After the master node collects all the local weights $\{ \hat{\beta}_{\hat{M}}^k: k=1 \cdots m \}$, the final weights are estimated by an average:
%Finally we collect the estimates and average them to give a final estimate of coefficients:
\begin{eqnarray}\label{eq:average}
\hat{\beta}_{\hat{M}} = \frac{1}{m} \sum_{k=1}^{m} \hat{\beta}^k_{\hat{M}}.
\end{eqnarray}
This step is the same as that in the MESSAGE algorithm.

%\begin{algorithm}
%\caption{A Parallel Inference Method for Regression with Grouped Variables}

%\REQUIRE $Y, \mathbf{X}, n, p, q, m$ and grouping info
%\ENSURE Coefficient estimates $\hat{\beta}$\\

%\end{algorithm}
\begin{algorithm}[t]
\caption{A Parallel Inference Algorithm for gLasso}
\label{alg-DC-gLasso}
\begin{algorithmic}[1]
\STATE Split the data randomly into $m$ subsets and distribute them on $m$ machines.
\STATE \textbf{On each machine:} do local group-Lasso $$\hat{\beta}^k = \argmin_{\beta}  \left\| Y^k - \mathbf{X}^k \beta \right\|_2^2 + \lambda \sum_{i=1}^{q} \left\| \beta_i \right\|_2.$$
\STATE Vote for the best sparse pattern $\hat M$ using the rule (\ref{eq:MV}).
\STATE \textbf{On each machine:} estimate the local linear regression weights $\hat{\beta}^k_{\hat M}$ by solving problem (\ref{eq:localOLE}).  %$\hat{\beta}^k_{\hat M} = \argmin_{\beta_{\hat M}}  \left\| Y^k_{\hat M} - \mathbf{X}^k_{\hat M} \beta \right\|_2^2$
\STATE Average to obtain the final estimate $\hat{\beta}_{\hat M}$ via rule (\ref{eq:average}).
\end{algorithmic}
\end{algorithm}

\subsection{Overlapping Group-Lasso}

%\junz{discuss on the extension to deal with overlapping group-structures}

In real-world datasets, variables can often belong to more than one group, which leads to overlapping between groups. The overlapping group-Lasso~\cite{Jacob2009Group} is an extension of gLasso to deal with these overlaps. It exploits the gLasso penalty with duplicated variables to obtain a sparse solution whose support (i.e., nonzero components of the recovered coefficients) is a union of groups.

Formally, let $\nu_j\in\mathbb{R}^p$ denote a vector whose nonzero components are those positions corresponding to the features in group $j$, and let $\mathcal{V}_j\subseteq \mathbb{R}^p$ be the subspace of such possible vectors. Then, by duplicating the variables in the original design matrix $\bf X$, we obtain a new matrix $\mathbf{X}^{'} =(\mathbf{X}_1,...,\mathbf{X}_q)$, where $\mathbf{X}_j$ is a sub-matrix of $\bf X$ that corresponds to the $j^{th}$ feature group. Note that there could be overlaps between $\mathbf{X}_i$ and $\mathbf{X}_j$ if groups $i$ and $j$ overlap. The coefficient vector $\beta$ is given by $\beta=\sum_{j=1}^{q}\nu_j$, and the overlapping group-Lasso solves the optimization problem~\cite{hastie2015statistical}:
$$\hat{\beta}=\argmin_{\beta=\sum_{j=1}^{q}\nu_j,\nu_j\in\mathcal{V}_j}\Vert Y-\mathbf{X^{'}}(\sum_{j=1}^{q}\nu_j) \Vert_2^2+\lambda\sum_{j=1}^{q}\Vert \nu_j\Vert_2.$$
This problem can be solved using a gLasso solver after duplicating the overlapped variables~\cite{Jacob2009Group}, and can be solved directly by accelerated gradient descent~\cite{Lei2013Efficient}, proximal gradient method~\cite{argyriou2011efficient} and Alternating Direction Method of Multipliers (ADMM)~\cite{boyd2011distributed}.

We can extend DC-gLasso to perform parallel inference in the presence of group overlappings. The DC-gLasso with overlaps (DC-ogLasso) has the similar two-step procedure as DC-gLasso. Algorithm~\ref{alg-DC-ogLasso} outlines the procedure of DC-ogLasso. We can see that the key difference is in the model selection stage, where a distributed overlapping group-Lasso is carried out on each local machine using any solver as mentioned above, and instead of selecting features in groups, we adopt a {\it select-and-discard} strategy to select the sparse model. This strategy turns out to have a larger probability of selecting the correct model in practice compared with the select-in-groups strategy as in DC-gLasso.%\junz{did you empirical compare to get this conclusion?}

The select-and-discard strategy works as follows. %is simple: the local models select individual features, not in groups;
After the master node collects local estimates $\hat \beta^k$, a majority voting is applied to set each individual feature to get the sparse model $\hat M$.
%After a sparse model $\hat M$ has been voted by all local models (i.e., model $\hat M$ contains feature $i$ if and only if no less than a half of the local models select feature $i$),
In order to ensure that the selected features constitute a union of groups, which is consistent with the property of overlapping group-Lasso, we apply ``security check" to discard those features that are ``alone" in model $\hat M$. By saying a feature is ``alone", we mean that there does not exist a group that contains the feature and whose features are all selected in model $\hat M$. Therefore, the discarding process ensures that we obtain a model whose support is a union of groups.
%With the select-and-discard strategy, the process of DC-ogLasso can be formulated in Algorithm~\ref{alg-DC-ogLasso}.

%\begin{algorithm}
%\caption{A Parallel Inference Method for Regression with Grouped Variables}

%\REQUIRE $Y, \mathbf{X}, n, p, q, m$ and grouping info
%\ENSURE Coefficient estimates $\hat{\beta}$\\

%\end{algorithm}
\begin{algorithm}[t]
\caption{DC-gLasso with Overlap}
\label{alg-DC-ogLasso}
\begin{algorithmic}[1]
\STATE Split the data randomly into $m$ subsets and distribute them on $m$ machines.
\STATE \textbf{On each machine:} do local overlapping group-Lasso $$\hat{\beta}^k = \argmin_{\beta=\sum_{j=1}^{q}\nu_j,\nu_j\in\mathcal{V}_j}\Vert Y^k-\mathbf{X}^{'k}(\sum_{j=1}^{q}\nu_j) \Vert_2^2+\lambda\sum_{j=1}^{q}\Vert \nu_j\Vert_2.$$
\STATE Vote for the best sparse pattern $\hat M$ using the select-and-discard strategy (See text).
\STATE \textbf{On each machine:} estimate the local linear regression weights $\hat{\beta}^k_{\hat M}$ by solving problem (\ref{eq:localOLE}).  %$\hat{\beta}^k_{\hat M} = \argmin_{\beta_{\hat M}}  \left\| Y^k_{\hat M} - \mathbf{X}^k_{\hat M} \beta \right\|_2^2$
\STATE Average to obtain the final estimate $\hat{\beta}_{\hat M}$ via rule (\ref{eq:average}).
\end{algorithmic}
\end{algorithm}
%\junz{put this description in a formal statement as in DC-gLasso}

\section{Model selection consistency}

%\junz{Say something about the theoretical study of gLasso; and discuss its difficulty as compared to Lasso.}

Wang et al.~\shortcite{wang2014median} provide a nice model selection consistency bound for MESSAGE, based on previous work on theoretical analysis of Lasso~\cite{Zhao2006On}. However, due to the complexity of various grouping structures of gLasso problems, it is very hard to derive a tight bound for gLasso, so is DC-gLasso.
In this section, we provide a theoretical result on model selection consistency of our algorithm. Let $\mathbf{M} = \{j | \beta_j \neq \mathbf{0}\}$ denote the sparsity pattern of the model parameter $\beta$. We make the same assumptions as ~\cite{bach2008consistency}:
\begin{enumerate}[(A)]
\item $\mathbf{X}$ and $Y$ have finite fourth order moments: $\mathbb{E}\left\| \mathbf{X} \right\|^4 < \infty  $ and $\mathbb{E}\left\| Y \right\|^4 < \infty$.
\item The joint matrix of second order moments $\mathbb{E} \mathbf{X}\mathbf{X}^\top \in \mathbb{R}^{p\times p}$ is invertible.
\item Strong condition for group-lasso consistency
$$\hspace{-.6cm}\max_{i\in \mathbf{M}^c}\frac{1}{d_i}\left\| (\mathbf{X}_i^\top \mathbf{X_M}) (\mathbf{X_M}^\top \mathbf{X_M})^{-1} \textrm{diag}\left( \frac{d_j}{ \left\| \mathbf{\beta}_j \right\| } \right)\mathbf{\beta_M} \right\| < 1,$$
\end{enumerate}
where $\mathbf{X_M}$ is the sub-matrix of $\mathbf{X}$ with the columns selected by $\mathbf{M}$, likewise for
$\mathbf{\beta_M}$ the sub-vector indexed by $\mathbf{M}$, and $\textrm{diag}(\frac{d_j}{ \left\| \mathbf{\beta}_j \right\| })$ is the block-diagonal matrix with $\frac{d_j}{\left\| \mathbf{\beta}_j\right\|} I_{p_j}$ on the diagonal. Then, we have the following result:
\begin{theorem}
\emph{(Model Selection Consistency)}
If each subset satisfies (A), (B) and (C), then there exist a sequence of $\lambda$ such that the sparsity pattern of the estimates given by this algorithm $M(\hat{\beta})$ converges in probability to $\mathbf{M}$ when $n/m$ goes to infinity.
\end{theorem}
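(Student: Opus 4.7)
The plan is to reduce consistency of the distributed procedure to the already-known consistency of single-machine group-Lasso under (A)--(C), and then to verify that majority voting does not destroy this property. The starting point is Bach's consistency result~\cite{bach2008consistency}: under (A), (B), (C), if $n'$ denotes the sample size on a single machine and $\lambda=\lambda_{n'}$ is chosen so that $\lambda_{n'}\to 0$ and $\lambda_{n'}\sqrt{n'}\to\infty$, then the local group-Lasso estimator satisfies $P(\hat{M}_k=\mathbf{M})\to 1$ as $n'=n/m\to\infty$. Applying this to every machine $k=1,\dots,m$ with the common choice $\lambda=\lambda_{n/m}$ yields local model-selection consistency on every shard.

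Next I would lift this to the aggregated estimator. Since the data are split uniformly at random into disjoint shards, the local selected sets $\hat{M}_1,\dots,\hat{M}_m$ are independent and each equals $\mathbf{M}$ with probability $p_{n/m}:=P(\hat{M}_k=\mathbf{M})\to 1$. A sufficient event for the voted $\hat{M}$ to equal $\mathbf{M}$ is that strictly more than $m/2$ of the $\hat{M}_k$ agree with $\mathbf{M}$. For fixed $m$, a union bound already suffices:
\begin{equation*}
P(\hat{M}\neq\mathbf{M})\;\leq\;\sum_{k=1}^{m}P(\hat{M}_k\neq\mathbf{M})\;\leq\;m\,(1-p_{n/m})\;\longrightarrow\;0.
\end{equation*}
If one wishes to allow $m$ to grow with $n$, the union bound can be replaced by a Hoeffding bound on the fraction of correct votes, which concentrates near one whenever $p_{n/m}>1/2+\delta$ for some $\delta>0$.

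To finish, note that $\hat{\beta}_{\hat{M}}$ is the average of OLS fits restricted to $\hat{M}$, so on the event $\{\hat{M}=\mathbf{M}\}$ the sparsity pattern of the final estimate equals $\mathbf{M}$ almost surely: under the Gaussian noise model a whole estimated group is annihilated only on a set of Lebesgue measure zero. Combining this with the previous paragraph gives $P(M(\hat{\beta})=\mathbf{M})\to 1$, which is precisely the claimed convergence in probability.

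The main obstacle I anticipate is in the clean invocation of Bach's theorem on each shard. Condition (C) as written involves sample Gram matrices $\mathbf{X}_i^\top\mathbf{X}_{\mathbf{M}}$ and $(\mathbf{X}_{\mathbf{M}}^\top\mathbf{X}_{\mathbf{M}})^{-1}$, so one must argue that the shard-level sample quantities concentrate to their population counterparts; (A) and (B) supply the needed moment conditions, and standard law-of-large-numbers arguments give $O_p(1/\sqrt{n/m})$ control that allows the strong irrepresentable condition to pass from the population to each shard with probability tending to one. A secondary subtlety is that for $m$ scaling with $n$, an explicit rate in $p_{n/m}$ would be needed to make the aggregation step vanish; under the natural reading of the theorem, in which $m$ is held fixed while $n/m\to\infty$, the union-bound step given above already closes the argument.
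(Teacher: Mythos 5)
Your proposal is correct and follows essentially the same route as the paper: invoke Bach's consistency theorem for single-machine group-Lasso on each shard of size $n/m$, then show that majority voting inherits this consistency as $Pr_{n/m}\to 1$. The only cosmetic difference is the concentration step --- the paper bounds the vote count via Chebyshev's inequality, obtaining $P(\hat M=\mathbf{M})\ge 1-\frac{Pr_{n/m}(1-Pr_{n/m})}{m\,(Pr_{n/m}-1/2)^2}$, whereas you use a union bound $m(1-p_{n/m})$; both suffice for fixed $m$, and your extra remarks (the a.s.\ nonvanishing of the OLS coefficients, the sample-versus-population reading of condition (C), and the Hoeffding variant for growing $m$) are refinements the paper's one-line proof simply omits.
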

\begin{proof}
We use $Pr_n$ to denote the probability of selecting the true model using group-Lasso with $n$ samples. Then we have
\begin{eqnarray}
Prob(\hat{M}=M) & \geq & Prob\left( \sum_{i=1}^{m} 1_{\hat{M}_i=M}\geq \frac{m}{2} \right) \nonumber \\
& \geq & 1 - \frac{Pr_{n/m}(1-Pr_{n/m})}{m(Pr_{n/m}-1/2)^2}, \nonumber
\end{eqnarray}
where the first inequality holds due to the selection rule of $\hat M$ in Eq.~(\ref{eq:MV}), and the second inequality holds due to Chebyshev's inequality.
Under those model assumptions, we can apply Theorem 2 in~\cite{bach2008consistency}, which states that there exist a sequence of $\lambda$ such that $Pr_n \rightarrow 1$ as $n \rightarrow \infty$. And $Prob(\hat{M}=M) \rightarrow 1$ follows consequently.
\end{proof}

\section{Experiments}

We present experimental results on both synthetic and real data sets to demonstrate the effectiveness of our algorithm.

\subsection{Synthetic datasets}

We first present results on synthetic data sets, which were commonly used to evaluate sparse learning methods.

\subsubsection{Performance}

We design a simulation model and evaluate our algorithm based on its performance on datasets drawn from the model. The model is similar to the ones introduced in \cite{yuan2006model} and \cite{yang2014fast}. The design matrix is generated as follows. The vectors $Z_i, i=1,2...q$ are generated from a multivariate normal distribution with a correlation matrix such that the correlation between $Z_i$ and $Z_j$ is $\rho$ for $i\neq j$. Let the $s$-sparse model's response vector $Y$ be
$$Y=\sum_{s|i,~1\leq i\leq q} \left( \frac{2}{3} Z_i-h_{i2} Z_i^2+\frac{1}{3}h_{i3}Z_i^3 \right) t_i + k \epsilon_0,$$
where $h_{i2},h_{i3}$ are positive real numbers such that $\Vert h_{i2} Z_i^2 \Vert_2=\Vert h_{i3} Z_i^3 \Vert_2=1$; $t_i=(-1)^{u_i}(3+v_i)$, with $u_i$ randomly drawn from set $\{0,1\}$ and $v_i$ drawn from $\mathcal{N}(0,1)$; $\epsilon_0$ is a noise vector drawn from $\mathcal{N}(0,1)$ and a scaling parameter $k$ ensures that the signal-to-noise ratio is $3.0$. The summation condition $s|i$ is true when $i$ is a multiple of $s$, which implies that only $1/s$ variables are involved in the model, so we call it a $s$-sparse model.
%\junz{what do you mean by $s|i$? define it.}

\begin{figure*}[htp] \vspace{-.5cm}  % Requires \usepackage{graphicx}
\centering
\subfigure{
\includegraphics[width=0.31\textwidth,height=0.2\textheight]{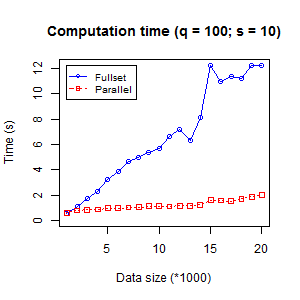}}\vspace{-.5cm}
\subfigure{
\includegraphics[width=0.31\textwidth,height=0.2\textheight]{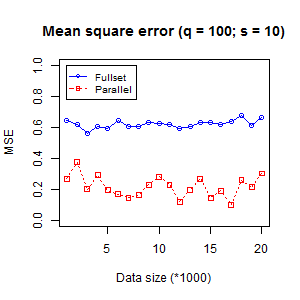}}\vspace{-.5cm}
\subfigure{
\includegraphics[width=0.31\textwidth,height=0.2\textheight]{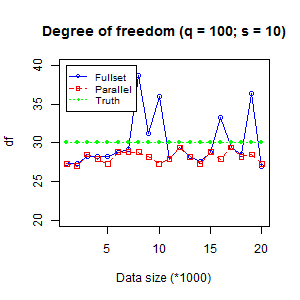}}\vspace{-.5cm}

\subfigure{
\includegraphics[width=0.31\textwidth,height=0.2\textheight]{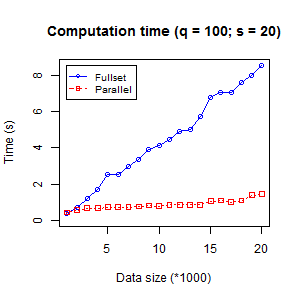}}\vspace{-.2cm}
\subfigure{
\includegraphics[width=0.31\textwidth,height=0.2\textheight]{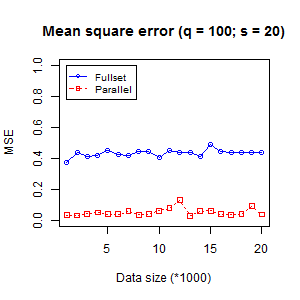}}\vspace{-.2cm}
\subfigure{
\includegraphics[width=0.31\textwidth,height=0.2\textheight]{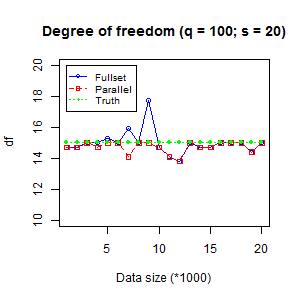}}\vspace{-.2cm}

\iffalse
\subfigure{
\includegraphics[width=0.31\textwidth,height=0.2\textheight]{}}\vspace{-.5cm}
\subfigure{
\includegraphics[width=0.31\textwidth,height=0.2\textheight]{}}\vspace{-.5cm}
\subfigure{
\includegraphics[width=0.31\textwidth,height=0.2\textheight]{}}\vspace{-.5cm}

\subfigure{
\includegraphics[width=0.31\textwidth,height=0.2\textheight]{}}\vspace{-.2cm}
\subfigure{
\includegraphics[width=0.31\textwidth,height=0.2\textheight]{}}\vspace{-.2cm}
\subfigure{
\includegraphics[width=0.31\textwidth,height=0.2\textheight]{}}\vspace{-.2cm}
\fi

\caption{Results in \textbf{Scenarios 1 \& 2} (group-Lasso part is implemented by R-package $SGL$).}\label{fig:sgl}\vspace{-.3cm}
\end{figure*}

When fitting it using a gLasso model, we treat the vectors $\{Z_i,h_{i2} Z_i^2,h_{i3}Z_i^3\}$ as a group. Compared with Eq. (\ref{eq2}), we have $\mathbf{X}_i=(Z_i,h_{i2} Z_i^2,h_{i3}Z_i^3)$ and $\beta_i=(\frac{2}{3}t_i,-t_i,\frac{1}{3}t_i)^\top$ when $i$ is a multiple of $s$, $\mathbf{0}$ otherwise. So $p=3q$, where $p$, $q$ and $n$ are the number of features, number of feature groups and sample size, respectively. The experiment task can be formulated as follows: given the response vector $Y$ of length $n$, the design matrix $\mathbf{X}$ of dimension $n\times p$ and the grouping information (i.e., three consecutive features in one group), recover the sparsity pattern and value of model coefficient $\beta$.

We consider the following two combinations of $q$, $s$ and $\rho$. In each setting, sample size $n$ varies from $1,000$ to $20,000$.
\begin{itemize}
\item[] \textbf{Scenario 1} $(p,q,s,\rho)=(300,100,10,0.5)$
\item[] \textbf{Scenario 2} $(p,q,s,\rho)=(300,100,20,0.5)$
%\item[] \textbf{Scenario 3} $(p,q,s,\rho)=(600,200,10,0.5)$
%\item[] \textbf{Scenario 4} $(p,q,s,\rho)=(600,200,20,0.5)$
\end{itemize}
In each scenario, we compare the performance of DC-gLasso with the counterpart algorithm on the full training set (denoted by gLasso).  %fullset group-lasso inference and our divide-and-conquer method.
When performing DC-gLasso, we fix the subset size (i.e., the number of samples handled on one machine) at $1,000$, which means that as the sample size grows from $1,000$ to $20,000$, the number of machines we use increases from $1$ to $20$. The full-set gLasso problem and the subset gLasso problems in our algorithm are solved by the ``SGL" R-package, which implements the generalized gradient descent based algorithm~\cite{Noah2013A}. During each gLasso optimization, the best model is chosen using BIC from a solution path consisting twenty lambdas (the default setting of the ``SGL" package).

We compare the performance in terms of computation time, mean square error between the recovered $\hat{\beta}$ and the true $\beta$, and sparsity pattern of the recovered model (measured by the number of nonzero variables in $\hat{\beta}$). The results are shown in Fig.~\ref{fig:sgl} with ``Fullset", ``Parallel" and ``Truth" denoting the result of full-set inference, the result of parallel inference and the ground truth, respectively.
We can see that in both combinations of feature group number and sparsity, DC-gLasso obtains a huge reduction in computation time while doing a great job on recovering the sparsity pattern. With the inclusion of coefficient estimation stage, DC-gLasso is able to achieve a lower mean square error in estimating model coefficients. We can also see that DC-gLasso always selects fewer variables than the fullset inference method, but they achieve comparable results on recovering the model sparsity pattern.

To verify that our divide-and-conquer scheme can reduce computation time in grouped regression problems regardless of the gLasso solver, we repeat the experiment using another gLasso R-package ``gglasso", which implements groupwise-majorization-descent algorithm~\cite{yang2014fast}, in the following settings:
\begin{itemize}
\item[] \textbf{Scenario 3} $(p,q,s,\rho)=(3000, 1000, 10,0.5)$
\item[] \textbf{Scenario 4} $(p,q,s,\rho)=(3000, 1000, 20,0.5)$
\end{itemize}
%A larger $q$ is used to ensure that the computation time varies with the sample size $n$ in "gglasso" implementation\junz{why this happens?}.

Here the sample size $n$ still varies from $1,000$ to $20,000$ with the subset size fixed at $1,000$. During each gLasso optimization, the best model is chosen using BIC from a solution path consisting one hundred lambdas (the default setting of the ``gglasso" package). The results are shown in Fig.~\ref{fig:gglasso}. Still, our method offers nearly constant computing time when the sample size scales up while having lower mean square error and better recovering of sparsity pattern.

\subsubsection{Sample complexity measurement}

In this experiment, we compare the degree of freedom of coefficients recovered by fullset gLasso and DC-gLasso to measure the number of samples needed to recover the true sparsity pattern for each method in the following two scenarios:
\begin{itemize}
\item[] \textbf{Scenario 5} $(m,p,q,s,\rho)=(10,3000,1000,10,0)$
\item[] \textbf{Scenario 6} $(m,p,q,s,\rho)=(10,3000,1000,20,0)$
\end{itemize}

In both scenarios, $10$ machines are used and the correlation between two feature vectors is set to zero to simplify the setting, with the subset size varying from 150 to 575. Fig.~\ref{fig:cmplx} shows that DC-gLasso needs 325 subset samples in scenario $5$, where sparsity is $0.1$, and 200 subset samples in scenario $6$, where sparsity is 0.05, to recover the true sparsity. Meanwhile, fullset gLasso needs at most $1,500$ samples.

\begin{figure}[htbp]\vspace{-.2cm}
   % Requires \usepackage{graphicx}
\centering

\subfigure{
\includegraphics[width=0.22\textwidth]{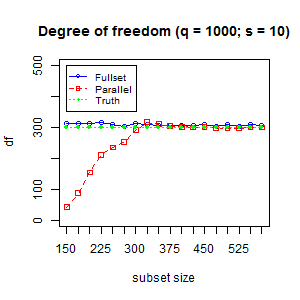}}\vspace{-.3cm}
\subfigure{
\includegraphics[width=0.22\textwidth]{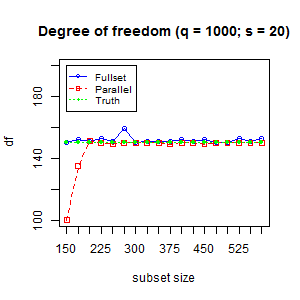}}\vspace{-.3cm}
\caption{Sample complexity results in \textbf{Scenarios 5 \& 6} (group-Lasso part is implemented by R-package $gglasso$).}
\label{fig:cmplx}\vspace{-.2cm}
\end{figure}

\subsubsection{Performance with overlapping groups}

To evaluate the performance of DC-ogLasso, we use a similar method of generating synthetic data as in~\cite{Lei2013Efficient}. Consider the linear regression problem, where the nonzero components of the coefficient form a union of groups. The group indices are predefined such that $G_1=\{1,2,...,10\},G_2=\{6,7,...,15\},G_3=\{11,12,...,20\},...,$, with each group having exactly $10$ features and overlapping half of the previous group. Each feature group is selected in probability $0.1$. The design matrix $\bf X$ and selected components of the coefficient $\beta$ are all generated from the multivariate $\mathcal N(0,1)$ without correlation. The standard gaussian noise $\epsilon$ is generated with a scaling factor $0.01$. Then the response vector $Y$ is computed as in Eq.~(\ref{eq1}).

\begin{figure}[htbp]\vspace{-.3cm}
   % Requires \usepackage{graphicx}
\centering

\subfigure{
\includegraphics[width=0.20\textwidth]{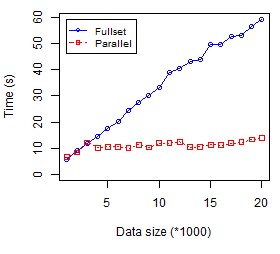}}\vspace{-.3cm}
\subfigure{
\includegraphics[width=0.20\textwidth]{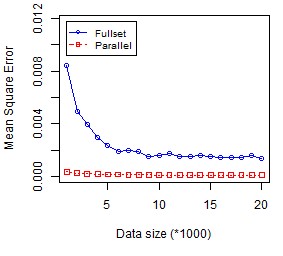}}\vspace{-.3cm}
\caption{Performance of DC-ogLasso (overlapping group-Lasso part is implemented by R-package $grpregOverlap$)}
\label{fig:overlap}
\end{figure}

The feature size is fixed $(p=1,000)$ and sample size $n$ varies from $1,000$ to $20,000$. For DC-ogLasso, each machine handles $1,000$ samples. The result is shown in Fig.~\ref{fig:overlap}. Still, our DC method greatly shortens the computation time when $n$ gets large.
%Although DC-ogLasso's time consumption is roughly half of that of overlapping group-Lasso, the computation time reduction is not as large as we expected.
In terms of model selection consistency, when $n\geq 2,000$, DC-ogLasso using select-and-discard strategy has no less than $89$ times out of $100$ when it selects the correct model, while this number for overlapping group-lasso is $91$. When using select-in-groups strategy, it drops to $87$. In terms of the mean square error of the recovered model coefficient, DC-ogLasso outperforms overlapping group-lasso, thanks to the inclusion of the coefficient estimation stage.

\begin{figure*}[htp]\vspace{-.5cm}
   % Requires \usepackage{graphicx}
\centering

\subfigure{
\includegraphics[width=0.31\textwidth,height=0.22\textheight]{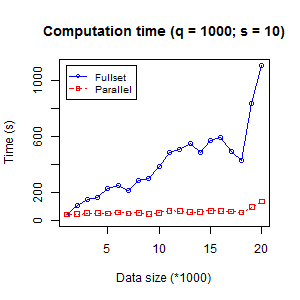}}\vspace{-.4cm}
\subfigure{
\includegraphics[width=0.31\textwidth,height=0.22\textheight]{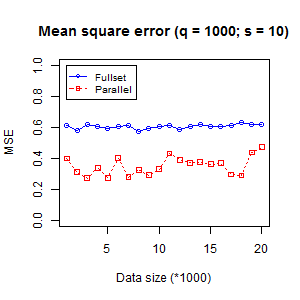}}\vspace{-.4cm}
\subfigure{
\includegraphics[width=0.31\textwidth,height=0.22\textheight]{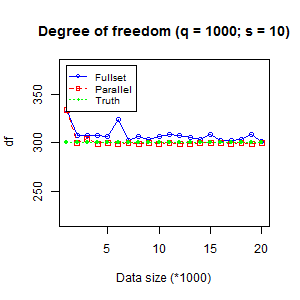}}\vspace{-.4cm}
\subfigure{
\includegraphics[width=0.31\textwidth,height=0.22\textheight]{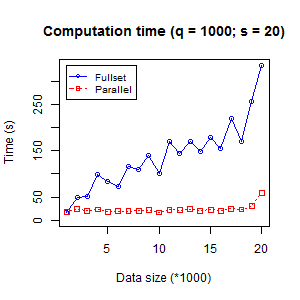}}\vspace{-.2cm}
\subfigure{
\includegraphics[width=0.31\textwidth,height=0.22\textheight]{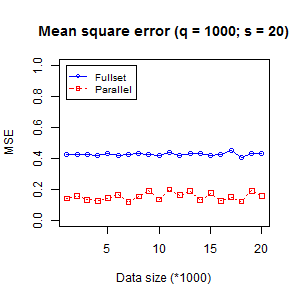}}\vspace{-.2cm}
\subfigure{
\includegraphics[width=0.31\textwidth,height=0.22\textheight]{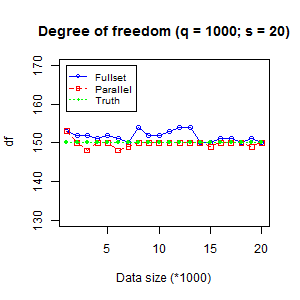}}\vspace{-.2cm}
\caption{Results in \textbf{Scenarios 3 \& 4} (group-Lasso part is implemented by R-package $gglasso$)}
\label{fig:gglasso}\vspace{-.2cm}
\end{figure*}

\subsection{MEMset Donor dataset}

Groupwise inference methods like gLasso can be applied to predict donor splice sites, which play a key role in finding genes. The MEMset Donor dataset\footnote{Available at http://genes.mit.edu/burgelab/maxent/ssdata/} is one of the splice sites datasets. It consists of a training set of $8,415$ true and $179,438$ false human donor sites with an additional test set of $4,208$ true and $89,717$ false donor sites. An instance of donor site sample is a sequence of $7$ factors with four levels ${A,C,G,T}$ (Please see Yeo and Burge~\shortcite{yeo2004maximum} for details). Here we follow an approach in~\cite{meier2008group}, which separates the original training set into an balanced training set of 5,610 true and 5,610 false donor sites, and a unbalanced validation set of 2,805 true and 59,804 false donor sites.

The data are represented as a collection of all factor interactions up to degree $2$. Each interaction is encoded using dummy variables and treated as a group, leading to $63$ groups of size varying from $4$ to $4^3$, a total 2,604-dimension feature space. We train the model on the balanced training set, then choose the best threshold parameter $\tau$ for classifying output over the trained model. That is, we assign sample $i$ to the true class if $p(x_i~\in~true~donor~sites)>\tau$ and to the false class otherwise. Finally, we evaluate the model using Pearson correlation between true class labels and predicted class labels on test set.

We compare the performance of logistic gLasso~\cite{meier2008group} and logistic DC-gLasso on this dataset. Logistic DC-gLasso is a modified version of DC-gLasso on logistic regression. In logistic DC-gLasso, the square loss is replaced by the cross-entropy loss and the distributed least square regression in the coefficient estimation stage replaced by the distributed logistic regression.

\begin{figure}[htbp]\vspace{-.4cm}

\centering

\subfigure{\hspace{-.3cm}
\includegraphics[width=0.17\textwidth,height=0.17\textheight]{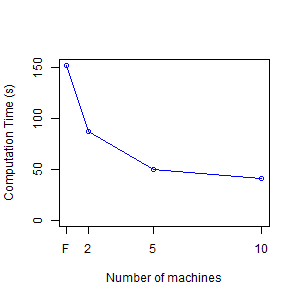}}\vspace{-.2cm}\hspace{-.4cm}
\subfigure{
\includegraphics[width=0.17\textwidth,height=0.17\textheight]{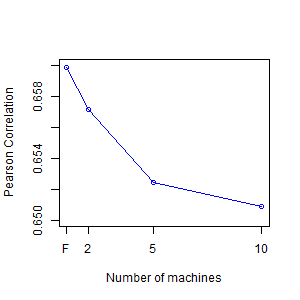}}\vspace{-.2cm}\hspace{-.4cm}
\subfigure{
\includegraphics[width=0.17\textwidth,height=0.17\textheight]{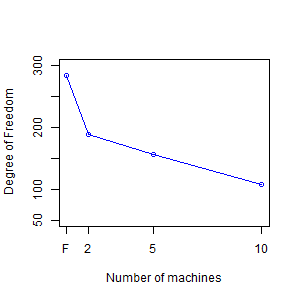}}\vspace{-.0cm}\hspace{-.10cm}
\caption{Results on the MEMset Donor dataset (group-Lasso part is implemented by R-package $gglasso$)}
\label{fig:splice}
\end{figure}

Fig.~\ref{fig:splice} shows the performance of fullset logistic gLasso (denoted by "F") and logistic DC-gLasso using 2, 5 and 10 machines on the MEMset Donor dataset. In our experiment, the correlation results are 0.6598 for fullset inference, 0.6571, 0.6524 and 0.6508 for logistic DC-gLasso on 2, 5 and 10 machines. Also note that the best reported result using logistic gLasso over degree 2 interactions is 0.6593~\cite{meier2008group}. Although our DC method suffers a small correlation loss compared with the original fullset approach, it enjoys a large reduce in training time: when running logistic DC-gLasso on 2 machines, only 87 seconds are needed to complete the original 151-second-job; when we increase the machine quantity to 5, it reduces to 50 seconds. In addition, the "Degree of Freedom" graph shows that the model we obtain shrinks as we split the training data in more parts, which makes sense because it's more difficult for more than half of all subsets to "vote" for one particular feature as the sample size in one subset declines, thus leading to a sparser model.

%\junz{For the algorithms in comparison, use consistent names "gLasso" and "DC-gLasso". If you like to ephasize the package used, some postfix can be used.}

\section{Conclusions}

We propose DC-gLasso, a parallel inference method for group-Lasso and demonstrate its excellent performance on large data sets. DC-gLasso enjoys the advantage of parallel computing while minimizing the communication cost between machines. It successfully provides a scalable solution to the group-Lasso method: with enough machines, DC-gLasso can complete the task in a roughly constant short time regardless of the data size, with low mean square error and high probability of selecting the correct model.

%% The file named.bst is a bibliography style file for BibTeX 0.99c
\bibliographystyle{named}
\bibliography{ijcai16}

\begin{thebibliography}{}

\bibitem[\protect\citeauthoryear{Argyriou \bgroup \em et al.\egroup
  }{2011}]{argyriou2011efficient}
Andreas Argyriou, Charles~A Micchelli, Massimiliano Pontil, Lixin Shen, and
  Yuesheng Xu.
\newblock Efficient first order methods for linear composite regularizers.
\newblock {\em arXiv preprint arXiv:1104.1436}, 2011.

\bibitem[\protect\citeauthoryear{Bach}{2008}]{bach2008consistency}
Francis~R Bach.
\newblock Consistency of the group lasso and multiple kernel learning.
\newblock {\em The Journal of Machine Learning Research}, 9:1179--1225, 2008.

\bibitem[\protect\citeauthoryear{Beck and Teboulle}{2009}]{beck2009fast}
Amir Beck and Marc Teboulle.
\newblock A fast iterative shrinkage-thresholding algorithm for linear inverse
  problems.
\newblock {\em SIAM journal on imaging sciences}, 2(1):183--202, 2009.

\bibitem[\protect\citeauthoryear{Boyd \bgroup \em et al.\egroup
  }{2011}]{boyd2011distributed}
Stephen Boyd, Neal Parikh, Eric Chu, Borja Peleato, and Jonathan Eckstein.
\newblock Distributed optimization and statistical learning via the alternating
  direction method of multipliers.
\newblock {\em Foundations and Trends{\textregistered} in Machine Learning},
  3(1):1--122, 2011.

\bibitem[\protect\citeauthoryear{Hastie \bgroup \em et al.\egroup
  }{2015}]{hastie2015statistical}
Trevor Hastie, Robert Tibshirani, and Martin Wainwright.
\newblock {\em Statistical learning with sparsity: the lasso and
  generalizations}.
\newblock CRC Press, 2015.

\bibitem[\protect\citeauthoryear{Jacob \bgroup \em et al.\egroup
  }{2009}]{Jacob2009Group}
Laurent Jacob, Guillaume Obozinski, and Jean~Philippe Vert.
\newblock Group lasso with overlap and graph lasso.
\newblock In {\em International Conference on Machine Learning}, pages
  433--440, 2009.

\bibitem[\protect\citeauthoryear{Lei \bgroup \em et al.\egroup
  }{2013}]{Lei2013Efficient}
Yuan Lei, Liu Jun, and Ye~Jieping.
\newblock Efficient methods for overlapping group lasso.
\newblock {\em IEEE Transactions on Pattern Analysis \& Machine Intelligence},
  35(9):2104 -- 2116, 2013.

\bibitem[\protect\citeauthoryear{Lim and Hastie}{2013}]{lim2013learning}
Michael Lim and Trevor Hastie.
\newblock Learning interactions through hierarchical group-lasso
  regularization.
\newblock {\em arXiv preprint arXiv:1308.2719}, 2013.

\bibitem[\protect\citeauthoryear{Lim}{2013}]{lim2013group}
Michael Lim.
\newblock {\em The Group-lasso: Two Novel Applications}.
\newblock PhD thesis, Stanford University, 2013.

\bibitem[\protect\citeauthoryear{Liu \bgroup \em et al.\egroup
  }{2009}]{liu2009slep}
Jun Liu, Shuiwang Ji, Jieping Ye, et~al.
\newblock Slep: Sparse learning with efficient projections.
\newblock {\em Arizona State University}, 6:491, 2009.

\bibitem[\protect\citeauthoryear{Mcdonald \bgroup \em et al.\egroup
  }{2009}]{mcdonald2009efficient}
Ryan Mcdonald, Mehryar Mohri, Nathan Silberman, Dan Walker, and Gideon~S Mann.
\newblock Efficient large-scale distributed training of conditional maximum
  entropy models.
\newblock In {\em Advances in Neural Information Processing Systems}, pages
  1231--1239, 2009.

\bibitem[\protect\citeauthoryear{Meier \bgroup \em et al.\egroup
  }{2008}]{meier2008group}
Lukas Meier, Sara Van De~Geer, and Peter B{\"u}hlmann.
\newblock The group lasso for logistic regression.
\newblock {\em Journal of the Royal Statistical Society: Series B (Statistical
  Methodology)}, 70(1):53--71, 2008.

\bibitem[\protect\citeauthoryear{Ogutu and Piepho}{2014}]{ogutu2014regularized}
Joseph~O Ogutu and Hans-Peter Piepho.
\newblock Regularized group regression methods for genomic prediction: Bridge,
  mcp, scad, group bridge, group lasso, sparse group lasso, group mcp and group
  scad.
\newblock In {\em BMC proceedings}, volume~8, page~S7. BioMed Central Ltd,
  2014.

\bibitem[\protect\citeauthoryear{Peng \bgroup \em et al.\egroup
  }{2013}]{peng2013parallel}
Zhimin Peng, Ming Yan, and Wotao Yin.
\newblock Parallel and distributed sparse optimization.
\newblock In {\em Signals, Systems and Computers, 2013 Asilomar Conference on},
  pages 659--646. IEEE, 2013.

\bibitem[\protect\citeauthoryear{Qin and Goldfarb}{2012}]{qin2012structured}
Zhiwei Qin and Donald Goldfarb.
\newblock Structured sparsity via alternating direction methods.
\newblock {\em The Journal of Machine Learning Research}, 13(1):1435--1468,
  2012.

\bibitem[\protect\citeauthoryear{Qin \bgroup \em et al.\egroup
  }{2013}]{qin2013efficient}
Zhiwei Qin, Katya Scheinberg, and Donald Goldfarb.
\newblock Efficient block-coordinate descent algorithms for the group lasso.
\newblock {\em Mathematical Programming Computation}, 5(2):143--169, 2013.

\bibitem[\protect\citeauthoryear{Shimizu \bgroup \em et al.\egroup
  }{2015}]{shimizu2015toward}
Yu~Shimizu, Junichiro Yoshimoto, Shigeru Toki, Masahiro Takamura, Shinpei
  Yoshimura, Yasumasa Okamoto, Shigeto Yamawaki, and Kenji Doya.
\newblock Toward probabilistic diagnosis and understanding of depression based
  on functional mri data analysis with logistic group lasso.
\newblock 2015.

\bibitem[\protect\citeauthoryear{Simon \bgroup \em et al.\egroup
  }{2013}]{Noah2013A}
Noah Simon, Jerome Friedman, Trevor Hastie, and Robert Tibshirani.
\newblock A sparse-group lasso.
\newblock {\em Journal of Computational \& Graphical Statistics},
  22(2):231--245, 2013.

\bibitem[\protect\citeauthoryear{Souly and Shah}{2015}]{souly2015visual}
Nasim Souly and Mubarak Shah.
\newblock Visual saliency detection using group lasso regularization in videos
  of natural scenes.
\newblock {\em International Journal of Computer Vision}, pages 1--18, 2015.

\bibitem[\protect\citeauthoryear{Tibshirani}{1996}]{tibshirani1996regression}
Robert Tibshirani.
\newblock Regression shrinkage and selection via the lasso.
\newblock {\em Journal of the Royal Statistical Society. Series B
  (Methodological)}, pages 267--288, 1996.

\bibitem[\protect\citeauthoryear{Villa \bgroup \em et al.\egroup
  }{2014}]{villa2014proximal}
Silvia Villa, Lorenzo Rosasco, Sofia Mosci, and Alessandro Verri.
\newblock Proximal methods for the latent group lasso penalty.
\newblock {\em Computational Optimization and Applications}, 58(2):381--407,
  2014.

\bibitem[\protect\citeauthoryear{Wang \bgroup \em et al.\egroup
  }{2014}]{wang2014median}
Xiangyu Wang, Peichao Peng, and David~B Dunson.
\newblock Median selection subset aggregation for parallel inference.
\newblock In {\em Advances in Neural Information Processing Systems}, pages
  2195--2203, 2014.

\bibitem[\protect\citeauthoryear{Yang and Zou}{2014}]{yang2014fast}
Yi~Yang and Hui Zou.
\newblock A fast unified algorithm for solving group-lasso penalize learning
  problems.
\newblock {\em Statistics and Computing}, pages 1--13, 2014.

\bibitem[\protect\citeauthoryear{Yeo and Burge}{2004}]{yeo2004maximum}
Gene Yeo and Christopher~B Burge.
\newblock Maximum entropy modeling of short sequence motifs with applications
  to rna splicing signals.
\newblock {\em Journal of Computational Biology}, 11(2-3):377--394, 2004.

\bibitem[\protect\citeauthoryear{Yuan and Lin}{2006}]{yuan2006model}
Ming Yuan and Yi~Lin.
\newblock Model selection and estimation in regression with grouped variables.
\newblock {\em Journal of the Royal Statistical Society: Series B (Statistical
  Methodology)}, 68(1):49--67, 2006.

\bibitem[\protect\citeauthoryear{Zhang \bgroup \em et al.\egroup
  }{2012}]{zhang2012communication}
Yuchen Zhang, Martin~J Wainwright, and John~C Duchi.
\newblock Communication-efficient algorithms for statistical optimization.
\newblock In {\em Advances in Neural Information Processing Systems}, pages
  1502--1510, 2012.

\bibitem[\protect\citeauthoryear{Zhao and Yu}{2006}]{Zhao2006On}
Peng Zhao and Bin Yu.
\newblock On model selection consistency of lasso.
\newblock {\em Journal of Machine Learning Research}, 7(3):2541--2563, 2006.

\bibitem[\protect\citeauthoryear{Zinkevich \bgroup \em et al.\egroup
  }{2010}]{zinkevich2010parallelized}
Martin Zinkevich, Markus Weimer, Lihong Li, and Alex~J Smola.
\newblock Parallelized stochastic gradient descent.
\newblock In {\em Advances in neural information processing systems}, pages
  2595--2603, 2010.

\end{thebibliography}

\end{document}